\newtheorem{claim}{Claim}
\newtheorem{prop}{Proposition}
\title{Large-scale Multi-view Subspace Clustering in Linear Time}
\author{  \Large \textbf{Zhao Kang,\textsuperscript{\rm 1} Wangtao Zhou,\textsuperscript{\rm 1} Zhitong Zhao,\textsuperscript{\rm 1} Junming Shao,\textsuperscript{\rm 1} Meng Han,\textsuperscript{\rm 2}Zenglin Xu,\textsuperscript{\rm 1}\textsuperscript{\rm ,3}}\\ 
\textsuperscript{\rm 1}School of Computer Science and Engineering, University of Electronic Science and Technology of China, China\\ 
 \textsuperscript{\rm 2}School of Management and Economics, University of Electronic Science and Technology of China, China\\
  \textsuperscript{\rm 3}Centre for Artificial Intelligence, Peng Cheng Lab, Shenzhen 518055, China\\
\{zkang,zlxu\}@uestc.edu.cn
}
\begin{document}

\maketitle

\begin{abstract}
A plethora of multi-view subspace clustering (MVSC) methods have been proposed over the past few years. Researchers manage to boost clustering accuracy from different points of view. However, many state-of-the-art MVSC algorithms, typically have a quadratic or even cubic complexity, are inefficient and inherently difficult to apply at large scales. In the era of big data, the computational issue becomes critical. To fill this gap, we propose a large-scale MVSC (LMVSC) algorithm with linear order complexity. Inspired by the idea of anchor graph, we first learn a smaller graph for each view. Then, a novel approach is designed to integrate those graphs so that we can implement spectral clustering on a smaller graph. Interestingly, it turns out that our model also applies to single-view scenario. Extensive experiments on various large-scale benchmark data sets validate the effectiveness and efficiency of our approach with respect to state-of-the-art clustering methods. 
\end{abstract}

\section{Introduction}
During the last decade, subspace clustering has been explored extensively due to its promising performance \cite{huang2019deep,peng2017subspace}. Basically, it is based on the assumption that the data points are drawn from multiple low-dimensional subspaces and each group fits into one of the low-dimensional subspaces. Significant progress has been made to uncover such underlying subspaces. 

Among the various subspace clustering approaches, spectral clustering based methods such as low-rank representation (LRR) \cite{liu2013robust,kang2015robust} and sparse subspace clustering (SSC) \cite{elhamifar2013sparse} have achieved great success. SSC finds a sparse representation of data points while LRR finds the low-rank structure of subspace. Afterwards, the spectral clustering algorithm is performed on such new representation \cite{ng2002spectral}. For $n$ data points, the first step typically takes $\mathcal{O}(n^2)$ or $\mathcal{O}(n^3)$ time while the second step requires $\mathcal{O}(n^3)$ or at least $\mathcal{O}(n^2k)$ time, where $k$ is the number of clusters. Therefore, these two time comsuming steps debar the application of subspace clustering from large-scale data sets.

Recently, much work have been developed to accelerate subspace clustering. For example, \cite{wang2014exact} proposes a data selection approach; a scalable SSC based on orthogonal matching pursuit is developed in \cite{you2016scalable}; a fast solver for SSC is proposed in \cite{pourkamali2018efficient}; accelerated LRR is also developed \cite{fan2018accelerated,xiao2015falrr}; a sampling approach is used in 
\cite{li2017large}. Unfortunately, they all target for single-view scenario and can not deal with multi-view data.

With the advance of information technology, many practical data come from multiple channels or appear in muliple modalities, i.e., the same object can be described in different kinds of features \cite{kang2019multi,yin2018multiview,zhan2018graph,liu2019multiple,multi2019yao,tang2019cross}. Different feature sets often characterize different yet complementary information. For instance, an image can have heterogeneous features, such as Garbor, HoG, SIFT, GIST, and LBP; an artical can be written in different languages. Accordingly, it is crucial to develop multi-view learning method to incorporate the useful informaiton from different views. 

There has been growing interest in developing multi-view subspace clustering (MVSC) algorithms. For example, MVSC \cite{gao2015multi} method learns a graph representation for each view and all graphs are assumed to share a unique cluster matrix; \cite{cao2015diversity} applies a diversity term to explore the complementarity information; \cite{luo2018consistent} explores both consistency and specificity of different views; \cite{wang2019multi,zhang2017latent} perform MVSC in latent representation; \cite{brbic2018} imposes both low-rank and sparsity constraints on the graph; \cite{kang2019multiple,kang2019partition} learn one partition for each view and them perform a fusion in partition space. In terms of clustering accuracy, these methods are attractive. Nevertheless, their computation efficiency, at least $\mathcal{O}(n^2k)$ complexity, is largely ignored. In the era of big data, it has been witnessed that many applications involve large-scale multi-view data. Therefore, their severely unaffordable complexity precludes MVSC from real-world applications with big data. Moreover, existing single-view subspace clustering acceleration techniques do not apply to multi-view scenarios owing to the heterogeneous nature of multi-view data.

To fill the gap, we need to address two challenges. First, how to avoid constructing $n\times n$ graph for multi-view data, which requires a lot of time and storage. Second, how to circumvent the computationally expensive step-----spectral clustering with $\mathcal{O}(n^2k)$ complexity. To this end, we introduce a novel large-scale multi-view subspace clustering (LMVSC) method, which can greatly reduce the computational complexity as well as improve the clustering performance. First, to address the large-scale graph construction problem, we select a small number of the instances as anchors, and then a smaller graph is built for each view. Second, a novel way to integrate mutiple graphs is designed so that we can implement spectral clustering on a small graph.

 The contributions of this paper mainly include：
\begin{itemize}
\item We make the first attempt towards large-scale multi-view subspace clustering. Our proposed method can be solved in linear time, while existing methods demand square or cubic time in the number of data instance.
\item Extensive experiments in terms of data sets and comparison methods validate the effectiveness and efficiency of the proposed algorithm.
 \item As a special case, our method also applies to single-view subspace clustering task. It gives promising performance on very large-scale data.
\end{itemize}

\section{Preliminaries}
\subsection{Multi-view Subspace Clustering}
Subspace clustering expresses each data point as a linear combination of other points and minimizes the reconstruction loss to obtain the combination coefficient. This coefficient is treated as the similarity between corresponding points. Mathematically, given data $X\in\mathcal{R}^{d\times n}$ with $d$ features and $n$ samples, subspace clustering solves the following problem \cite{liu2013robust,elhamifar2013sparse,kang2019low}:
\begin{equation}
\min_Z \|X-XS\|_F^2+\alpha f(S)\quad s.t.\quad S\geq 0,\hspace{0.1cm}  S\textbf{1}=\textbf{1},
\end{equation}
where $f(\cdot)$ is a certain regularizer function, $\alpha>0$ is a balance parameter, and \textbf{1} is a vector with all ones. The constraints require that all $S_{ij}$ is nonnegative and $\sum_j S_{ij}=1$. $S$ is often called similarity graph matrix. It is obvious that $S$ has size of $n\times n$, which poses a big challenge with scalability to large-scale data sets.

Recently, multi-view subspace clustering (MVSC) has attracted much attention. Basically, for multi-view data $X=[X^1; \cdots; X^i; \cdots; X^v]\in\mathcal{R}^{\sum\limits_{i=1}^v d_i\times n}$, MVSC \cite{gao2015multi,cao2015diversity,kang2019multi} solves:
\begin{equation}
\min_{S^i} \sum\limits_{i=1}^v \|X^i-X^iS^i\|_F^2+\alpha f(S^i)\hspace{0.15cm} s.t.\hspace{0.15cm}S^i\geq 0,\hspace{0.1cm}  S^i\textbf{1}=\textbf{1}.
\label{mvsc}
\end{equation}
With different forms of $f$, Eq. (\ref{mvsc}) gives solutions with different properties. For example, \cite{wang2016iterative} encourages similarity between pairs of graphs; \cite{cao2015diversity} emphasizes the complementarity. With these multiple graphs, \cite{gao2015multi} assumes they produce the same clustering; \cite{cao2015diversity,wang2016iterative} perform spectral clustering on averaged graph. All these MVSC methods take at least $\mathcal{O}(n^2k)$ time, hence they are generally computational prohibtive.

\subsection{Anchor Graph}
Anchor graph is an efficient way to construct the large-scale graph matrix  \cite{chen2011large,liu2010large,wang2016scalable,han2017orthogonal}. Specifically, a subset of $m$ points $(m\ll n)$ $A=[a_1, a_2,\cdots, a_m]\in\mathcal{R}^{d\times m}$ are chosen based on a $k$-means clustering or random sampling on the original data set. Each point plays a role as a landmark which approximates the neighborhood structure. Then, a sparse affinity matrix $Z\in\mathcal{R}^{n\times m}$ is constructed between the landmarks and the original data as follows:
\begin{equation}
Z_{ij}=
 \begin{cases}\frac{K_\delta(x_i,a_j)}{\sum_{j'\in<i>}K_\delta(x_i,a_{j'})}, &j\in<i>\\
   0, &\text{otherwise}
    \end{cases}
    \label{hand}
\end{equation}
where $<i>$ indicates the indexes of $r$ $ (r<m)$ nearest anchors around $x_i$. The function $K_\delta (\cdot)$ is the commonly used Gaussian kernel with bandwidth parameter $\delta$. Subsequently, the doubly-stochastic similarity matrix $S\in\mathcal{R}^{n\times n}$, the summation of each column and each row equal to one, is built based on:
\begin{equation}
S=\hat{Z}\hat{Z}^\top\quad \textit{where}\quad  \hat{Z}=Z\Sigma^{-1/2},
\label{large}
\end{equation}
where $\Sigma$ is a diagonal matrix with the entry $\Sigma_{ii}=\sum_{j=1}^n Z_{ji}$. 

We can see that the above graph construction approach is extremely efficient since only $\mathcal{O}(mn)$ distances are considered. However, this heuristic strategy has one inherent drawback which is always ignored, i.e., quality of the built graph heavily depends on the exponential function. In general, it might not be appropriate to the structure of the data space \cite{shakhnarovich2005learning,kang2019cyber}. Furthermore, it is also challenging to find a good kernel parameter without any supervision information. Consequently, the downstream task might not be able to obtain a good performance. 

To obtain a similarity measure tailored to the problem at hand, a principled way is to learn it from data \cite{kang2019Clustering}. In this paper, inspired by the idea of anchor graph, we use a smaller graph $Z$ to approximate the full $n\times n$ graph matrix. Different from existing approach, we adaptively learn $Z$ from raw data by solving an optimization problem.

\section{Methodology}
For large-scale data, there is much redundancy in the data; a small number of the instances are sufficient for reconstructing the underlying subspaces. 
Similar to anchor strategy, we use a smaller matrix $Z^i\in \mathcal{R}^{n\times m} ( m\ll n)$ to approximate the full $n\times n$ matrix $S^i$. Specifically, a set of $m$ landmarks $A^i=\{a_j\}_{j\in[1,m]}$ (cluster centers) are obtained through $k$-means 
clustering on the data $X^i$. Then, $Z^i$ is built between the landmarks $A^i$ and $X^i$. 

Rather than using the handcrafted similarity measure (\ref{hand}), we learn $Z^i$ from data. For multi-view data, we solve the following problem:
\begin{equation}
\begin{split}
\min_{Z^i} &\sum_{i=1}^{v}\|X^i-A^i(Z^i)^\top\|_F^2+\alpha \|Z^i\|_F^2\\
&s.t.\hspace{.15cm} 0\leq Z^i,\hspace{.1cm} (Z^i)^\top\textbf{1}=\textbf{1}.
\end{split}
\label{sc}
\end{equation}
Compared to Eq. (\ref{mvsc}), we only consider the similarities between $mn$ points instead of $n^2$, so the complexity is remarkably reduced. Problem (\ref{sc}) can be easily solved via convex quadratic programming.

Given a set of $v$ graphs $\{Z^i\}_{i\in[1,v]}$, our goal is to merge them so as to make use of the rich information contained in $v$ views. Obviously, it is not feasible to run spectral clustering using $\{Z^i\}_{i\in[1,v]}$ since their size is not $n\times n$ anymore. For multi-view case, we can follow a similar procedure as the traditional method, i.e., apply Eq. (\ref{large}) to restore the $n\times n$ graph $S^i$ and then take average, 
\begin{equation}
\bar{S}=\frac{\sum_i S^i}{v}. 
\end{equation}
In the sequel, spectral clustering is implemented to achieve embedding $Q\in\mathcal{R}^{n\times k}$, i.e.,
\begin{equation}
\max_Q Tr(Q^\top\bar{S}Q)\quad  s.t.\quad Q^\top Q=\mathbf{I}.
\end{equation}
Its solution is the $k$ eigenvectors corresponding to the largest $k$ eigenvalues of $\bar{S}$. However, the eigen decomposition takes at least $\mathcal{O}(n^2k)$ time.

In this paper, relying on proposition \ref{th} \cite{chen2011large,affeldt2019spectral}, we present an alternative approach to compute the $k$ left singular vectors of the concatenated matrix $\bar{Z}\in\mathcal{R}^{n\times mv}$,
\begin{equation}
\bar{Z}=\frac{1}{\sqrt{v}}[\hat{Z}^1, \cdots, \hat{Z}^i, \cdots, \hat{Z}^v].\label{cont}
\end{equation}
Since $mv\ll n$, using $\bar{Z}$ instead of $\bar{S}$ naturally reduces the computational cost of $Q$.
\begin{prop}\label{th}
Given a set of similarity matrices $\{S^i\}_{i\in[1,v]}$, each of them can be decomposed as $Z^i(Z^i)^\top$. Define $\bar{Z}=\frac{1}{\sqrt{v}}[Z^1, \cdots, Z^i, \cdots, Z^v]\in\mathcal{R}^{n\times mv}$, its Singular Value Decomposition (SVD) is $U\Lambda V^\top$ (with $U^{\top}U=\mathbf{I}$, $V^{\top}V=\mathbf{I}$). We have,
\begin{equation}\label{equiveq}
\max_{Q^{\top}Q=\mathbf{I}}Tr(Q^{\top} \bar{S} Q) \Leftrightarrow \min_{Q^{\top}Q=\mathbf{I},H}||\bar{Z}-Q H^{\top}||_F^2.
\end{equation}\\
And, the optimal solution $Q^*$ is equal to $U$.
\end{prop}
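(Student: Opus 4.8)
The plan is to reduce both sides of (\ref{equiveq}) to the same trace-maximization problem and then invoke the Ky Fan (Rayleigh--Ritz) characterization of its optimizer. The starting observation is that the concatenation in the proposition is built precisely so that $\bar{Z}\bar{Z}^\top=\frac{1}{v}\sum_{i=1}^v Z^i(Z^i)^\top=\frac{1}{v}\sum_{i=1}^v S^i=\bar{S}$; that is, $\bar{S}$ is the Gram matrix of the rows of $\bar{Z}$. Consequently, if $\bar{Z}=U\Lambda V^\top$ is the thin SVD, then $\bar{S}=U\Lambda V^\top V\Lambda U^\top=U\Lambda^2 U^\top$, so the columns of $U$ are exactly the eigenvectors of $\bar{S}$, with eigenvalues $\Lambda_{jj}^2$ ordered the same way as the singular values.

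First I would dispatch the right-hand minimization. Fixing $Q$ with $Q^\top Q=\mathbf{I}$, the map $H\mapsto\|\bar{Z}-QH^\top\|_F^2$ is a convex quadratic, and setting its gradient to zero yields the global minimizer $H=\bar{Z}^\top Q$. Substituting back and using that $P:=QQ^\top$ is an orthogonal projection (so $P^2=P=P^\top$, since $Q^\top Q=\mathbf{I}$), I obtain
\begin{equation*}
\|\bar{Z}-QQ^\top\bar{Z}\|_F^2 = Tr\!\big(\bar{Z}^\top(\mathbf{I}-P)\bar{Z}\big) = Tr(\bar{Z}^\top\bar{Z}) - Tr(Q^\top\bar{Z}\bar{Z}^\top Q) = Tr(\bar{Z}^\top\bar{Z}) - Tr(Q^\top\bar{S}Q).
\end{equation*}
Since $Tr(\bar{Z}^\top\bar{Z})$ is a constant independent of $Q$, minimizing the right side of (\ref{equiveq}) over $(Q,H)$ is equivalent to maximizing $Tr(Q^\top\bar{S}Q)$ subject to $Q^\top Q=\mathbf{I}$, which is exactly the left side; this establishes the claimed equivalence.

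Finally, for the optimal $Q$: $\bar{S}$ is symmetric positive semidefinite, so by the Ky Fan theorem the maximum of $Tr(Q^\top\bar{S}Q)$ over $n\times k$ matrices with orthonormal columns equals the sum of the $k$ largest eigenvalues of $\bar{S}$ and is attained by stacking the corresponding top-$k$ eigenvectors of $\bar{S}$ as columns. By the first paragraph these coincide with the top-$k$ left singular vectors of $\bar{Z}$, i.e.\ $Q^*=U$ (the leading $k$ columns of $U$). I do not expect a serious obstacle here; the only points deserving care are verifying that $H=\bar{Z}^\top Q$ is a genuine global minimizer (immediate from convexity in $H$) and noting that $Q^*$ is determined only up to a $k\times k$ orthogonal rotation --- such a rotation changes neither objective, and since the subsequent $k$-means step acts on the rows of $Q$, this ambiguity is immaterial.
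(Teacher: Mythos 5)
Your proposal is correct and follows essentially the same route as the paper: eliminate $H$ via $H^*=\bar{Z}^\top Q$, reduce the reconstruction problem to $\max_{Q^\top Q=\mathbf{I}}Tr(Q^\top\bar{Z}\bar{Z}^\top Q)=\max Tr(Q^\top\bar{S}Q)$, and identify the left singular vectors of $\bar{Z}$ with the eigenvectors of $\bar{S}$ via $\bar{S}=U\Lambda^2U^\top$. Your version is slightly more careful than the paper's (explicit Ky Fan citation, the constant-term expansion, and the remark on rotational ambiguity), but there is no substantive difference in approach.
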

\begin{proof}
Based on the second term of Eq.~
(\ref{equiveq}), one can easily show that $H^{*}=\bar{Z}^{\top}Q$. Plugging the value of $H^{*}$ into Eq.~(\ref{equiveq}), the following equivalences hold
\begin{eqnarray*}
\min_{Q^{\top}Q=\mathbf{I},H}||\bar{Z}-Q H^{\top}||_F^2 &\Leftrightarrow& \min_{Q^{\top}Q=\mathbf{I}}||\bar{Z}-QQ^{\top} \bar{Z}||_F^2\\
&\Leftrightarrow& \max_{Q^{\top}Q=\mathbf{I}}Tr(Q^{\top} \bar{Z}\bar{Z}^{\top} Q)\\
&\Leftrightarrow& \max_{Q^{\top}Q=\mathbf{I}}Tr(Q^{\top} \bar{S}Q).
\end{eqnarray*}

Moreover, 
\begin{eqnarray*}
\bar{S}=\bar{Z}\bar{Z}^{\top}&=&(U \Lambda V^{\top})(U \Lambda V^{\top})^\top\nonumber\\
&=&U \Lambda (V^{\top}V) \Lambda U^{\top}\nonumber\\
&=&U \Lambda^2 U^{\top}.
\end{eqnarray*}
Thereby the left singular vectors of $\bar{Z}$ are the same as the eigenvectors of $\bar{S}$. 
\end{proof}

\begin{algorithm}[htbp]
\caption{LMVSC algorithm}
\label{alg1}
\small
 {\bfseries Input:} Multi-view data matrix $X^1, \cdots, X^v$, cluster number $k$, parameter $\alpha$, anchors $\{A^i\}_{i\in[1,v]}$.\\

 {\bfseries Step 1:} Solve problem (\ref{sc});\\
 {\bfseries Step 2:} Construct $\bar{Z}$ as (\ref{cont});\\
 {\bfseries Step 3:}  Compute $Q$ by performing SVD on $\bar{Z}$.\\
{\bfseries Output:} Run $k$-means on $Q$ to achieve final clustering.\\
\end{algorithm}

The complete procedures for our LMVSC method is summarized in Algorithm \ref{alg1}. 

\begin{claim}
As a special case, the proposed LMVSC algorithm is also suitable to single-view data.
\end{claim}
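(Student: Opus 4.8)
The plan is to verify that every step of Algorithm~\ref{alg1} remains well-defined and keeps its justification when specialized to $v=1$, so that the pipeline collapses exactly to a single-view anchor-based subspace clustering procedure. First I would observe that with one view problem~(\ref{sc}) becomes
\begin{equation*}
\min_{Z}\ \|X-A Z^\top\|_F^2+\alpha\|Z\|_F^2\quad s.t.\quad 0\leq Z,\ Z^\top\textbf{1}=\textbf{1},
\end{equation*}
which is still a convex quadratic program in the $n\times m$ variable $Z$ and is solved by exactly the same routine, in time linear in $n$. Hence Step~1 carries over without change.

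Next I would check Step~2: with $v=1$ the concatenation~(\ref{cont}) is simply $\bar Z=\hat Z^1\in\mathcal{R}^{n\times m}$ (the factor $1/\sqrt v$ equals $1$ here), the normalized anchor graph of the single view. Moreover, multiplying $\bar Z$ by a positive scalar does not change its left singular vectors, so the normalization is in any case irrelevant to the single-view reduction.

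The crux is Step~3, where I would invoke Proposition~\ref{th} with $v=1$. Its statement and proof consist only of algebraic identities --- $H^{*}=\bar Z^\top Q$, the chain of equivalences between the three optimization problems, and $\bar S=\bar Z\bar Z^\top=U\Lambda^2 U^\top$ --- and none of them uses $v\geq 2$. Setting $v=1$, Proposition~\ref{th} therefore states that the $k$ left singular vectors of $\hat Z^1$ are precisely the $k$ eigenvectors corresponding to the largest $k$ eigenvalues of the single-view similarity matrix $S^1=\hat Z^1(\hat Z^1)^\top$ of~(\ref{large}). Consequently Step~3 followed by the final $k$-means is exactly spectral clustering on the single-view graph $S^1$, carried out without ever forming the $n\times n$ matrix $S^1$ or running an $\mathcal{O}(n^2k)$ eigen-decomposition --- which is what ``suitable to single-view data'' is meant to assert.

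The only real obstacle here is bookkeeping rather than computation: confirming that neither the solvability of~(\ref{sc}) nor the derivation of Proposition~\ref{th} tacitly assumes more than one view (it does not), and checking that the linear-in-$n$ complexity is preserved when $v=1$. Collecting these observations establishes the claim.
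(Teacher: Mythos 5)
Your proof is correct and takes essentially the same approach as the paper, which simply observes that setting $v=1$ gives $\bar{Z}=\hat{Z}$ in Eq.~(\ref{cont}) and the whole procedure goes through unchanged. Your version is just a more detailed walkthrough of that one-line observation, verifying each step of Algorithm~\ref{alg1} and Proposition~\ref{th} under $v=1$.
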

\begin{proof}
It is easy to observe that our proposed precedure also works for single view data, i.e., $v=1$ and $\bar{Z}=\hat{Z}$ in Eq. (\ref{cont}).
\end{proof}

\subsection{Complexity Analysis}
LMVSC benefits from the low complexity of the anchor strategy for the graph construction and the eigen decomposition on a low-dimensional matrix. Specifically, the construction of $Z^is$ takes $\mathcal{O}(nm^3v)$, where $n$ is the number of data points, $m$ is the number of anchors $(m\ll n)$, $v$ is the view number. In experiments, we apply the built-in matlab function \texttt{quadprog} to solve (\ref{sc}), in which inter-point method with complexity $\mathcal{O}(m^3)$ is used \cite{ye1989extension}. It is worth mentioning that the construction of $Z^is$ can be easily parallelized over multiple cores, leading to more efficient computation. The computational cost for the $Q$ embedding induces a computational complexity of $\mathcal{O}(m^3v^3+2mvn)$. In addition, we need additional $\mathcal{O}(nmtp)$ for the $k$-means at the begining for anchor selection and $\mathcal{O}(nk^2t)$ for the last $k$-means on $Q$, where $t$ is the number of iterations, $k$ is the number of clusters, $p=\sum\limits_{i=1}^v d_i$ is the summation of feature dimensions. Accordingly, our proposed LMVSC method costs time only linear in $n$.

  \begin{table*}[!hbtp]
\begin{center}
\caption{Description of the data sets. The dimension of features is shown in parenthesis. }
\label{datasets} 
\resizebox{1.8\columnwidth}{!}{
\begin{tabular}{lllll}
\hline

{View} &{Handwritten} & {Caltech7/20}  & {Reuters} & {NUS}  \\\hline
1& Profile correlations (216) & Gabor(48) & English (21531) & Color Histogram (65)\\
2& Fourier coefficients (76) & Wavelet moments (40)& French (24892)& Color moments (226)\\
3& Karhunen coefficients (64)  &CENTRIST (254)& German (34251)& Color correlation (145)\\
4 &  Morphological (6) &HOG (1984)& Italian (15506)& Edge distribution (74)\\
5& Pixel averages (240)  & GIST (512)& Spanish (11547)& Wavelet texture (129) \\
6& Zernike moments (47)  &LBP (928)& - &- \\\hline
Data points & 2000 & 1474/2386&18758 & 30000\\
Class number& 10 & 7/20 & 6 & 31\\
\hline
\end{tabular}}
\end{center}
\end{table*}

\begin{table}			
	\centering
		\caption{Clustering performance on Caltech7 data. “-” denotes some unreasonable values that are close to zero.\label{cal7}}
		\resizebox{.9\columnwidth}{!}{
		
				\begin{tabular}{|c |c | c|c| c|  }
					\hline
					Method& Acc& NMI& Purity& Time (s)\\
					\hline	
E3SC(1)&	0.3060&	0.1993&	0.3677&	10.32\\
E3SC(2)&	0.3256&	0.2648&	0.3582&	9.84\\
E3SC(3)&	0.4261&	0.2534&	0.4573&	18.19\\
E3SC(4)&	0.5509&	0.4064&	0.5787&	82.43\\
E3SC(5)&	0.5122&	0.3863&	0.5598&	27.80\\
E3SC(6)&	0.5244&	0.4273&	0.5773&	42.86\\
SSCOMP(1)&0.2619&	0.0640&	0.2958&	0.84\\
SSCOMP(2)&	0.3759&0.2017&0.4023&0.82\\
SSCOMP(3)&0.3256&0.1354&0.3460&1.50\\
SSCOMP(4)&0.5604&0.4631&0.6357&5.35\\
SSCOMP(5)&0.4715&-&-&2.11\\
SSCOMP(6)&0.5000&0.3688&0.5577&3.00\\						
AMGL&	0.4518&	0.4243&	0.4674&	20.12\\
MLRSSC&	0.3731&	0.2111&	0.4145&	22.26\\
MSC\_IAS&0.3976&0.2455&0.4444&57.18\\
LMVSC&	\textbf{0.7266}&	\textbf{0.5193}&	\textbf{0.7517}&	135.79\\

					\hline			
			\end{tabular}}
	

	\end{table}

\section{Experiment on Multi-View Data}
In this section, we conduct extensive experiments to assess the performance of the proposed method on real-world data sets.
\subsection{Data Sets}
We perform experiments on several benchmark data sets: Handwritten, Caltech-101, Reuters, NUS-WIDE-Object. Specifically, Handwritten consists of handwritten digits of 0 to 9 from UCI machine learning repository. Caltech-101 is a data set of images for object recognition. Following previous work, two subsets, caltech7 and caltech20, are used. Reuters contains documets written in five different languages and their translations. A subset written in English and all their translations are used here. NUS-WIDE-Object (NUS) is also a object recognition database. More detailed information about these data is shown in Table \ref{datasets}.

\begin{table}			

\centering
	\caption{Clustering performance on Caltech20 data.\label{cal20}}
\resizebox{.9\columnwidth}{!}{
			
			\begin{tabular}{|c |c | c|c| c|  }
				\hline
				Method& Acc& NMI& Purity& Time (s)\\
				\hline	
				E3SC(1)&	0.2531&	0.2670&	0.2657&	27.71\\
				E3SC(2)&	0.2670&	0.3248&	0.2993&	26.81\\
				E3SC(3)&	0.2921&	0.3358&	0.3282&	46.69\\
				E3SC(4)&	0.4107&	\textbf{0.5576}&	0.4908&	180.70\\
				E3SC(5)&	0.4162&	0.4834&	0.4644&	61.32\\
				E3SC(6)&	0.4845&	0.4976&	0.5444&	99.62\\
SSCOMP(1)&0.1928&0.1359&0.2087&1.66\\
				SSCOMP(2)&0.2511&0.2338&0.2955&1.56\\
				SSCOMP(3)&0.1945&0.1319&0.2360&2.75\\
				SSCOMP(4)&0.3667&0.4487&0.4288&10.02\\
				SSCOMP(5)&0.2930&0.1681&\textbf{0.6010}&4.45\\
				SSCOMP(6)&0.3567&0.3735&0.4346&5.88\\
				AMGL&	0.3013&	0.4054&	0.3164&	77.63\\
				MLRSSC&	0.2821&	0.2670&	0.3039&	607.28\\
				MSC\_IAS&0.3127&0.3138&0.3374&93.87\\
				LMVSC&	\textbf{0.5306}&	0.5271&	0.5847&	342.97\\

				\hline			
			\end{tabular}
			
			}
	
\end{table}

\subsection{Clustering Evaluation}
We compare LMVSC\footnote{Code is available at https://github.com/sckangz/LMVSC} with several recent work as stated bellow:\\
\textbf{Efficient Solvers for Sparse Subspace Clustering (E3SC)}: recent subspace clustering method proposed in \cite{pourkamali2018efficient}. We run E3SC on each view and report its performance E3SC($\cdot$) .\\
\textbf{Scalable Sparse Subspace Clustering by Orthogonal Matching Pursuit (SSCOMP)}: one of the representative large-scale subspace clustering method developed in \cite{you2016scalable}. We run it on each single view.\\
\textbf{Parameter-Free Auto-Weighted Multiple Graph Learning (AMGL)}: one of the state-of-the-art multi-view spectral clustering method proposed in \cite{nie2016parameter}.\\
\textbf{Multi-view Low-rank Sparse Subspace Clustering (MLRSSC)}: recent method \cite{brbic2018} which outperforms many multi-view spectral clustering methods, e.g., \cite{xia2014robust}.\\
\textbf{Multi-view Subspace Clustering with Intactness-aware Similarity (MSC\_IAS)}: another recent multi-view subspace clustering method proposed in \cite{wang2019multi}, which reports improvements against a number of MVSC methods, e.g., \cite{zhang2017latent}.

All our experiments are implemented on a computer with a 2.6GHz Intel Xeon CPU and 64GB RAM, Matlab R2016a. For fair comparison, we follow the experimental setting described in respective papers and tune the parameters to obtain the best performance. We search anchor number $m$ in the range $[k, 50, 100]$. The motivation behind this is that the number of points required for revealing the underlying subspaces should not be less than the number of subspaces, i.e., the cluster number $k$. In addition, some other factors, such as subspace's intrinsic dimensions, noise level could also influence the anchor number. We select $\alpha$ from the range $[0.001, 0.01, 0.1, 1, 10]$. We assess the performance with accuracy (Acc), normalized mutual information (NMI), purity, and running time. 

\begin{table}			

	\centering
			\caption{Clustering performance on Handwritten data.\label{hw}}
			\resizebox{.9\columnwidth}{!}{
			\begin{tabular}{|c |c | c|c| c|  }
				\hline
				Method& Acc& NMI& Purity& Time (s)\\
				\hline	
				E3SC(1)&	0.7555&	0.7491&	0.8385&	29.94\\
				E3SC(2)&	0.5790&	0.5279&	0.6200&	20.80\\
				E3SC(3)&	0.7300&	0.7226&	0.8095&	19.47\\
				E3SC(4)&	0.4530&	0.4623&	0.5290&	16.07\\
				E3SC(5)&	0.7555&	0.7153&	0.8345&	31.52\\
				E3SC(6)&	0.6345&	0.6189&	0.6825&	18.69\\
SSCOMP(1)&0.4680&0.3939&0.5485&2.21\\
				SSCOMP(2)&0.3640&0.2913&0.4475&2.07\\
				SSCOMP(3)&0.2430&0.1536&0.4475&2.39\\
				SSCOMP(4)&0.3005&0.3191&0.3250&0.81\\
				SSCOMP(5)&0.5370&0.4911&0.6640&2.23\\
				SSCOMP(6)&0.3115&0.1586&0.3670&1.37\\							
				AMGL&	0.8460&	\textbf{0.8732}&	0.8710&	67.58\\
				MLRSSC&	0.7890&	0.7422&	0.8375&	52.44\\
				MSC\_IAS&0.7975&0.7732&0.8755&80.78\\
				LMVSC&	\textbf{0.9165}&	0.8443&	\textbf{0.9165}&	10.55\\

				\hline			
			\end{tabular}
			
		}
	
\end{table}

\begin{table}			

\centering
			\caption{Clustering performance on Reuters data.\label{reu}}
			\resizebox{.9\columnwidth}{!}{
			\begin{tabular}{|c |c | c|c| c|  }
				\hline
				Method& Acc& NMI& Purity& Time (s)\\
				\hline
					SSCOMP(1)&0.2714&-&-&4721.90\\
				SSCOMP(2)&0.2730&-&-&5070.80\\
				SSCOMP(3)&0.2735&-&-&4524.10\\
				SSCOMP(4)&0.2719&-&-&4515.70\\
				SSCOMP(5)&0.3128&-&-&4162.40\\	
				AMGL&	0.1672&	-&	-&	32397\\	
						MSC\_IAS&0.5063 &0.2759 &0.6031 &7015.7 \\			
				LMVSC&	\textbf{0.5890}&	\textbf{0.3346}&	\textbf{0.6145}&	130.73\\

				\hline			
			\end{tabular}
			
		}
	
\end{table}
\subsection{Experimental Results}
Tables \ref{cal7}-\ref{nus} show the clustering results of all methods on five data sets. Due to out-of-memory issue, many comparison methods, e.g., E3SC and MLRSSC, are not appliable to Reuters and NUS data sets which are more than 10,000 samples. Therefore, they are not listed in Table \ref{reu} and Table \ref{nus}. This demonstrates that the proposed method is low space complexity. In terms of accuracy, our method constantly outperforms others and the average improvement is at least 7\% over the second highest value. For NMI and purity, our method achieves comparable or even better performance than the other methods. This demonstrates the consistency and stability of our method. Taken Caltech7 and Handwritten as examples, we visualize their clustering results based on t-SNE technique in Figure \ref{tsne}. We can clearly observe the cluster pattern.

\begin{table}			
	\centering
			\caption{Clustering performance on NUS data.\label{nus}}
			\resizebox{.9\columnwidth}{!}{
			\begin{tabular}{|c |c | c|c| c|  }
				\hline
				Method& Acc& NMI& Purity& Time (s)\\
				\hline	
				SSCOMP(1)&0.1153&0.0755&0.1412&53.63\\
					SSCOMP(2)&0.1206&-&-&102.64\\
					SSCOMP(3)&0.1138&-&-&90.88\\
				SSCOMP(4)&0.0950&-&-&64.26\\
				SSCOMP(5)&0.0756&-&-&83.83\\	
					MSC\_IAS& 0.1548& \textbf{0.1521}& 0.1675& 45386\\
				LMVSC&\textbf{0.1553}&	0.1295&	\textbf{0.1982}&	165.39\\

				\hline			
			\end{tabular}
			
	}
	
\end{table}

For running time comparison, our method can finish all data sets in several minutes. Our method is up to several orders of magnitude faster than other multi-view methods on large data sets. For example, MSC\_IAS consumes almost 13 hours to finish NUS data, while our method only takes 3 minutes. Note that the state-of-the-art scalable method SSCOMP takes extremely long time to run reuters data, which is due to the data's high dimensionality. According to our observation, the fluctuation of our execution time is mainly caused by the number of anchors. For instance, Handwritten achieves the best results when the fewest anchors are used, i.e., cluster number 10. Consequently, it takes the shortest time.

\begin{figure}[!htbp]
\centering
\subfloat[Caltech7]{\includegraphics[width=.4\textwidth]{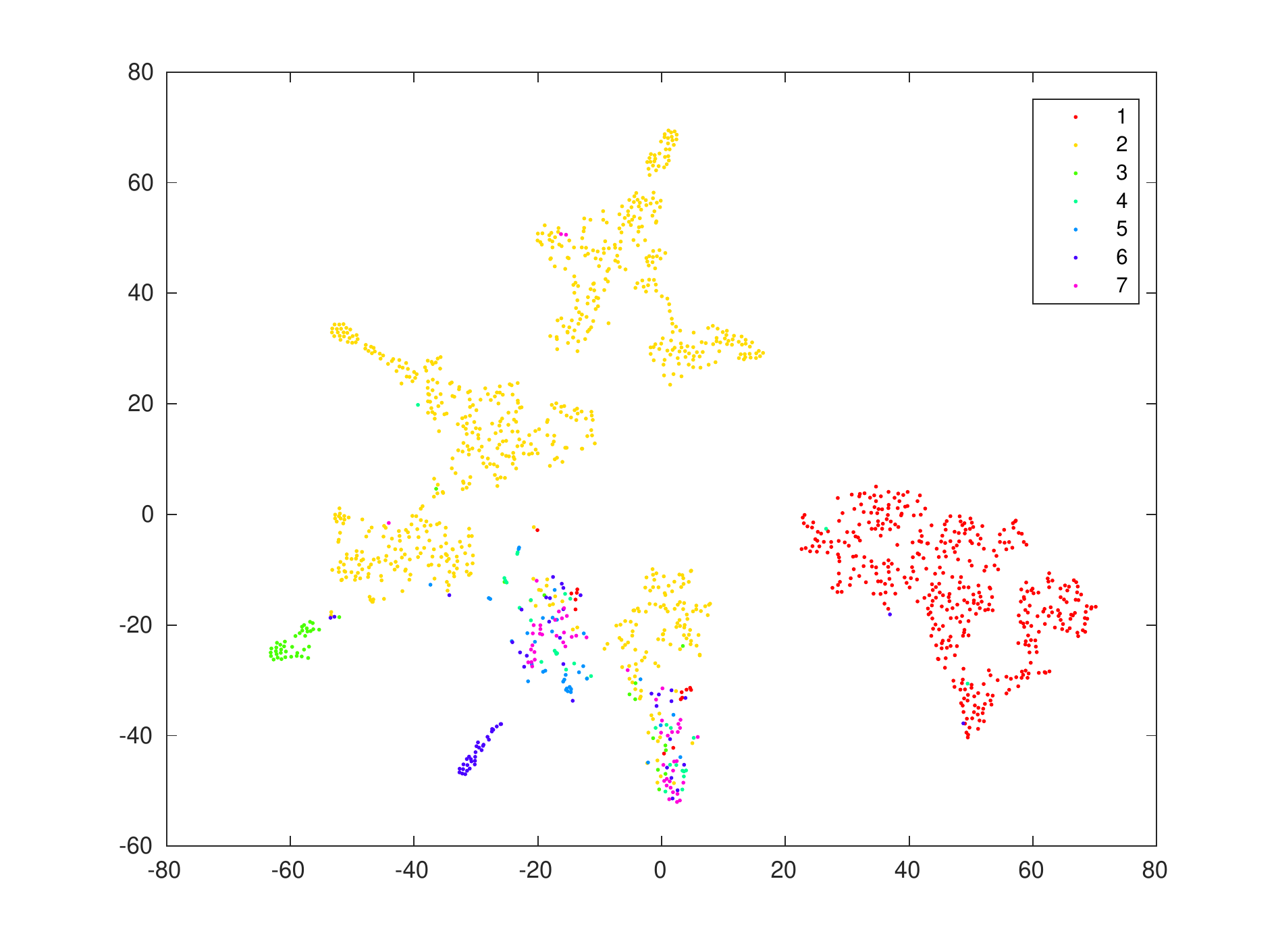}}\\
\subfloat[Handwritten]{\includegraphics[width=.4\textwidth]{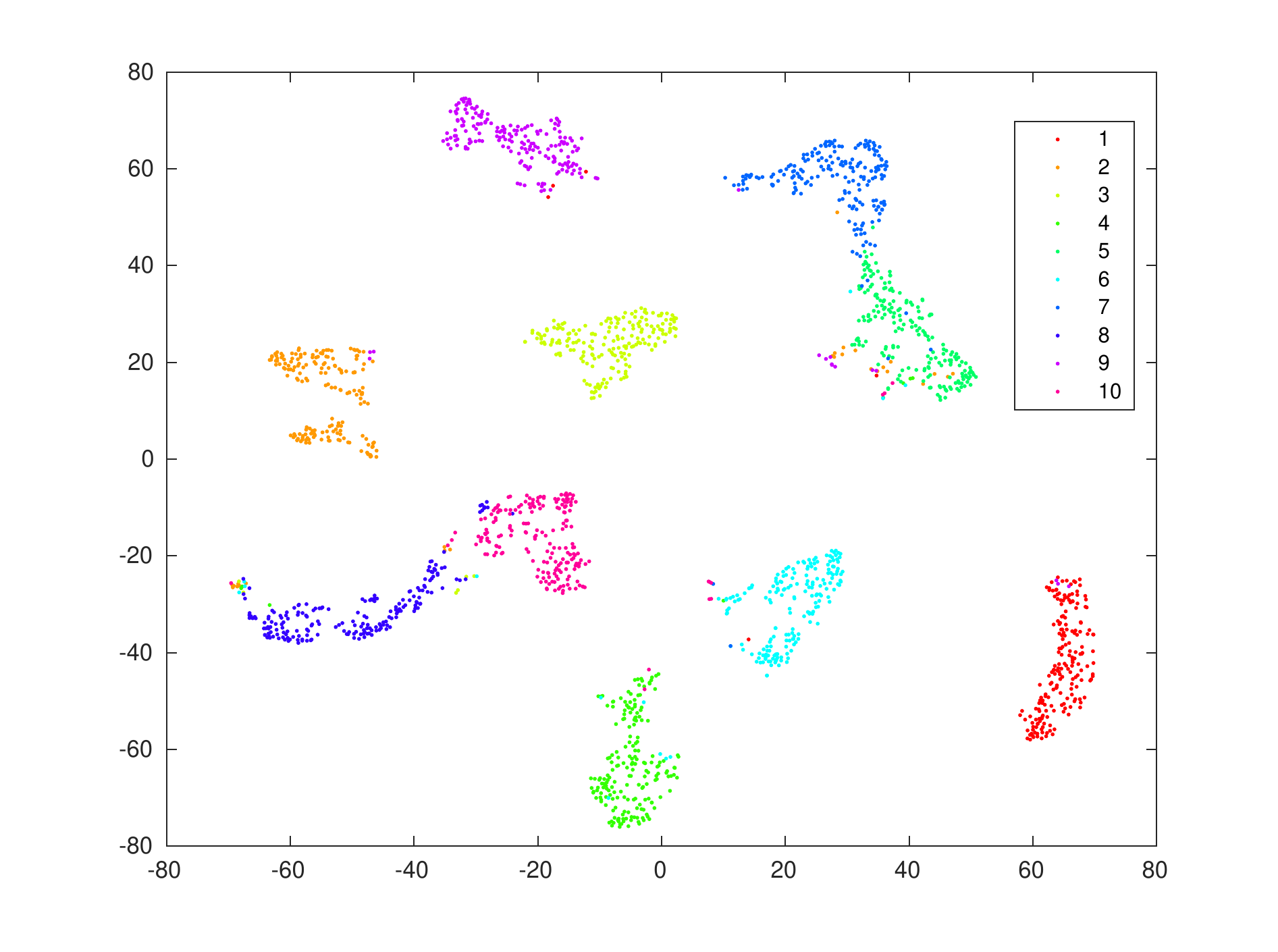}}\\
\caption{Visualization of some clustering results.  } \label{tsne}
\end{figure}
\subsection{Robustness Study}
To examine the robustness of our method, we perform additional experiments by adding three commonly seen kinds of noise. We choose the famous MNIST database, which consists of gray-scale images of size $28\times 28$. There are 70000 samples in total.  Since it is a single-view data, we construct a multi-view data set by adding different levels of noise to different views. Specifically, we first corrupt MNIST data by adding Gaussian noise with variance 0.01, 0.03, and 0.05; Salt\&Pepper noise with density 0.05, 0.1, and 0.2; Speckle noise with variance 0.05, 0.1, and 0.15, respectively. Then, we concatenate them to form three three-view data sets. Some example images are shown in Figures \ref{noiseg}-\ref{noisesa}. For these large data sets, we find that only SSCOMP and our method can handle it, while other recent multi-view clustering methods fail. 

According to the results in Table \ref{mnist}, our method obtains better performance than SSCOMP in all metrics. This also verifies the advantage of multi-view learning which can exploit the complementarity of multi-view data. In terms of computation time, our method is 20 times faster than SSCOMP.

\begin{table}[t]
\centering

\caption{Clustering performance on MNIST data.\label{mnist}}
\resizebox{1\columnwidth}{!}{
\begin{tabular}{|c|c|c|c|c|c|}
\hline
{Noise}&{Method} & Acc& NMI& Purity& Time (s)\\
\hline

\multirow{4}*{Gaussian}&SSCOMP(1)&0.4300&0.4726&0.5965&1120.80\\
\cline{2-6}
&SSCOMP(2)&0.4465&0.4749&0.5908&1151.50\\
\cline{2-6}
&SSCOMP(3)&0.4418&0.4754&0.5920&1130.30\\
\cline{2-6}
&LMVSC&\textbf{0.5565}&\textbf{0.5096}&\textbf{0.6282}&55.17	\\
\hline

\multirow{4}*{Speckle}&SSCOMP(1)&0.4417&0.4770&0.5967&1314.90\\
\cline{2-6}
&SSCOMP(2)&0.4560&0.4795&0.5995&1287.70\\
\cline{2-6}
&SSCOMP(3)&0.4556&0.4855&0.6043&1303.40\\
\cline{2-6}
&LMVSC&\textbf{0.5920}&\textbf{0.5178}&\textbf{0.6183}&66.01	\\
\hline
\multirow{4}*{Salt\&Pepper}&SSCOMP(1)&0.4536&0.4782&0.5964&1181.90\\
\cline{2-6}
&SSCOMP(2)&0.4513&0.4822&0.6021&1306.10\\
\cline{2-6}
&SSCOMP(3)&0.4816&0.4923&0.6307&1302.00\\
\cline{2-6}
&LMVSC&\textbf{0.5889}&\textbf{0.5374}&\textbf{0.6598}&73.33	\\
\hline
\end{tabular}

}
			
	
\end{table}

\begin{figure}[!htbp]
\centering
\includegraphics[width=.3\textwidth, angle =270]{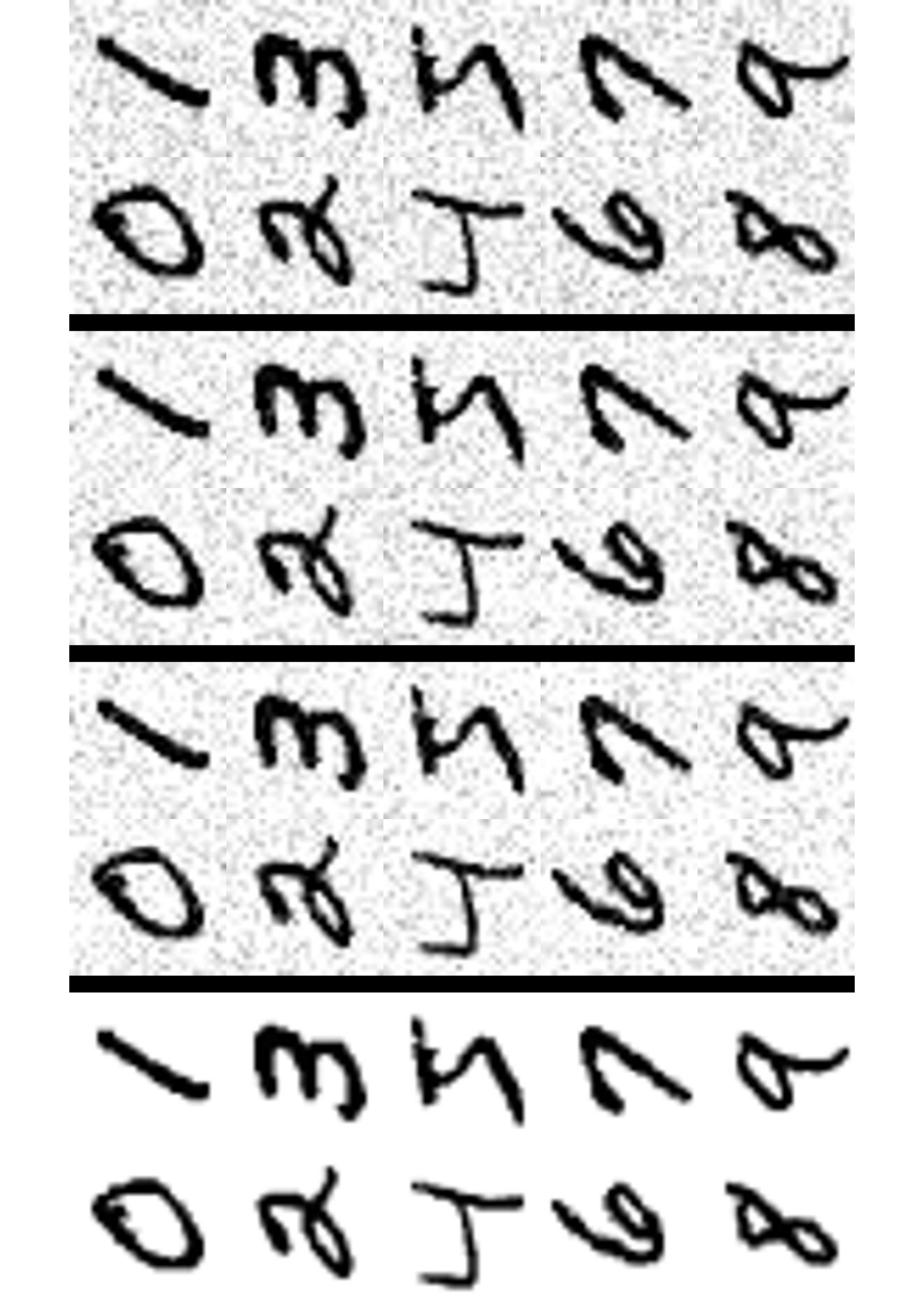}
\caption{Some sample images of MNIST are shown in the 1st column. Corresponding noisy images contaminated by Gaussian noise with variance 0.01, 0.03, and 0.05, are displayed in the 2nd, 3rd, and 4th column, respectively.  } \label{noiseg}
\end{figure}
\begin{figure}[!htbp]
\centering
\includegraphics[width=.3\textwidth,angle =270]{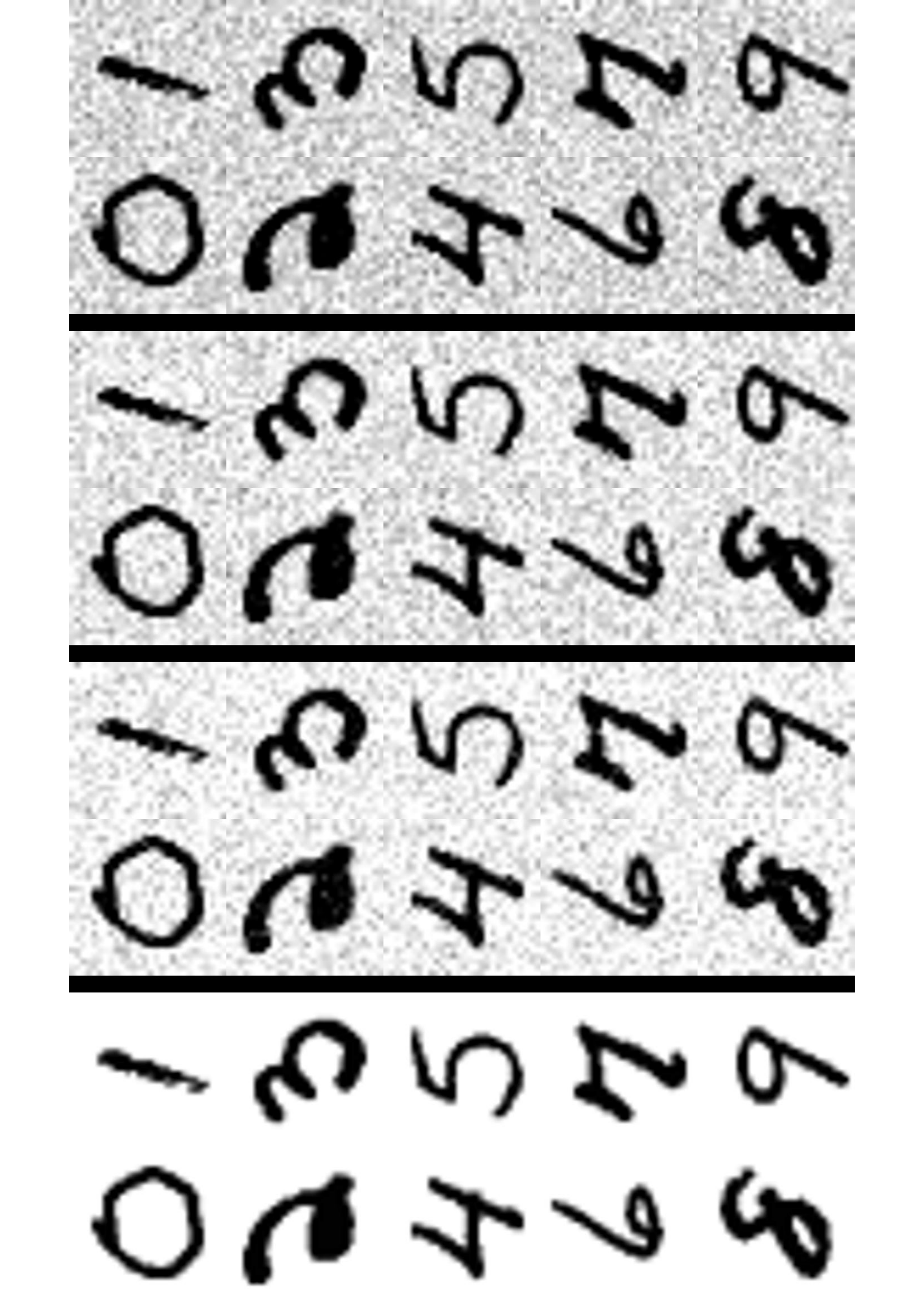}
\caption{Some sample images of MNIST are shown in the 1st column. Corresponding noisy images contaminated by Speckle noise with variance 0.05, 0.1, and 0.15, are displayed in the 2nd, 3rd, and 4th column, respectively.  } \label{noisesp}
\end{figure}

\subsection{Parameter Analysis}
During the experiments, we tune two parameters based on grid search, i.e., the number of anchors and $\alpha$. Taking Handwritten and MNIST data sets as examples, we show our model's sensitivity to their values in Figures \ref{fig1} and \ref{fig2}. We can see that $\alpha$ tends to have a small value since the performance is degraded when $\alpha$ increases. We have similar observation for anchor number. Too many anchors will make themselves less representative and introduce extra errors. Consequently, the performance will be deteriorated.
\begin{figure}[!htbp]
\centering
\includegraphics[width=.3\textwidth, angle =270]{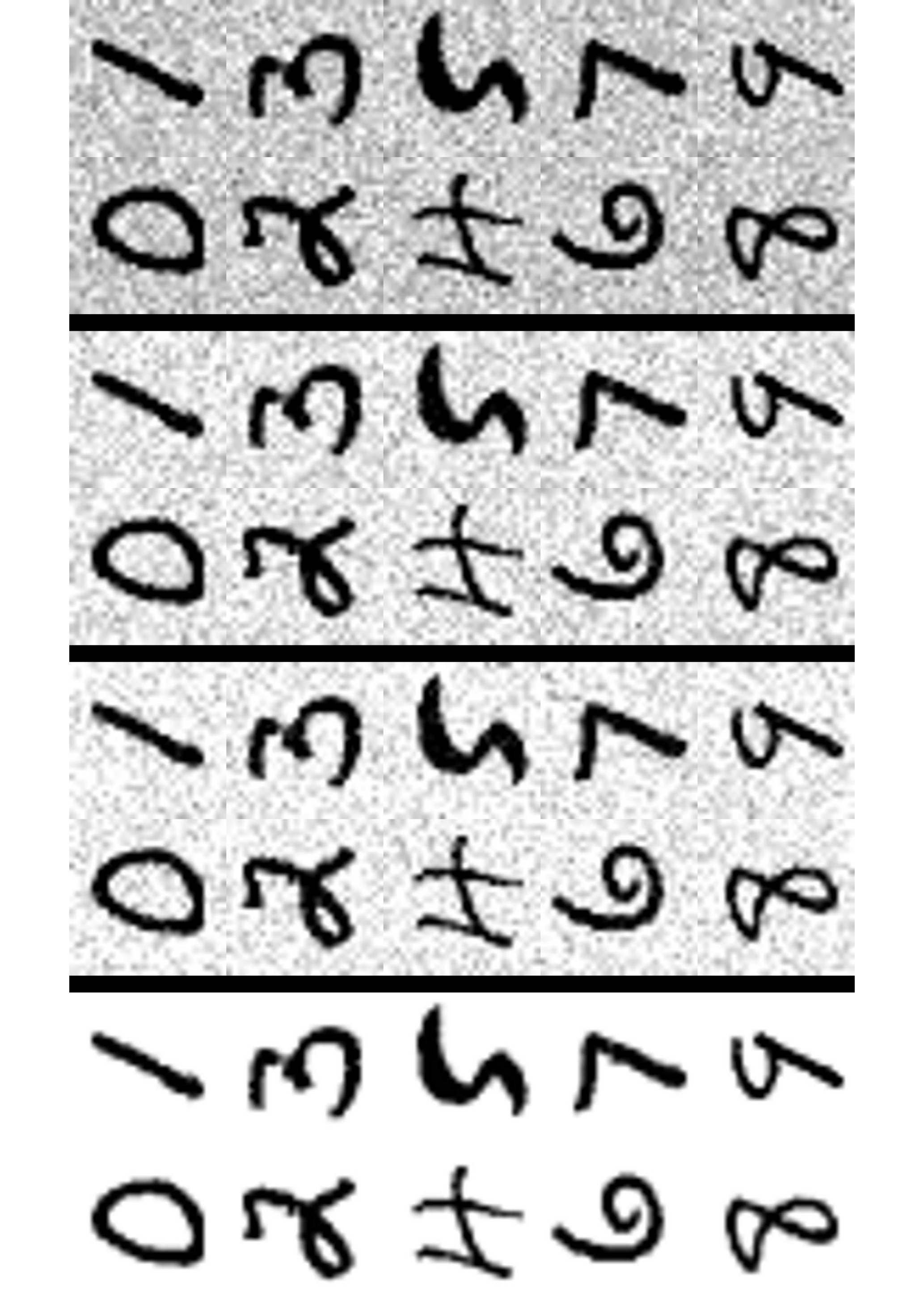}
\caption{Some sample images of MNIST are shown in the 1st column. Corresponding noisy images contaminated by Salt\&Pepper noise with density 0.05, 0.1, and 0.2, are displayed in the 2nd, 3rd, and 4th column, respectively.  } \label{noisesa}
\end{figure}

  \begin{table}[!hbtp]
\begin{center}

\caption{Description of single-view data sets. \label{singledata}}

\begin{tabular}{l|clll}
\hline

{Data Sets} &{\#Feature} & {\#Classes}  & {\#Sample}  \\\hline
RCV1& 1979& 103& 193844\\
CoverType&54 & 7& 581012\\
\hline
\end{tabular}
\end{center}
\end{table}

 \begin{figure*}[!htbp]
\centering
\includegraphics[width=.33\textwidth]{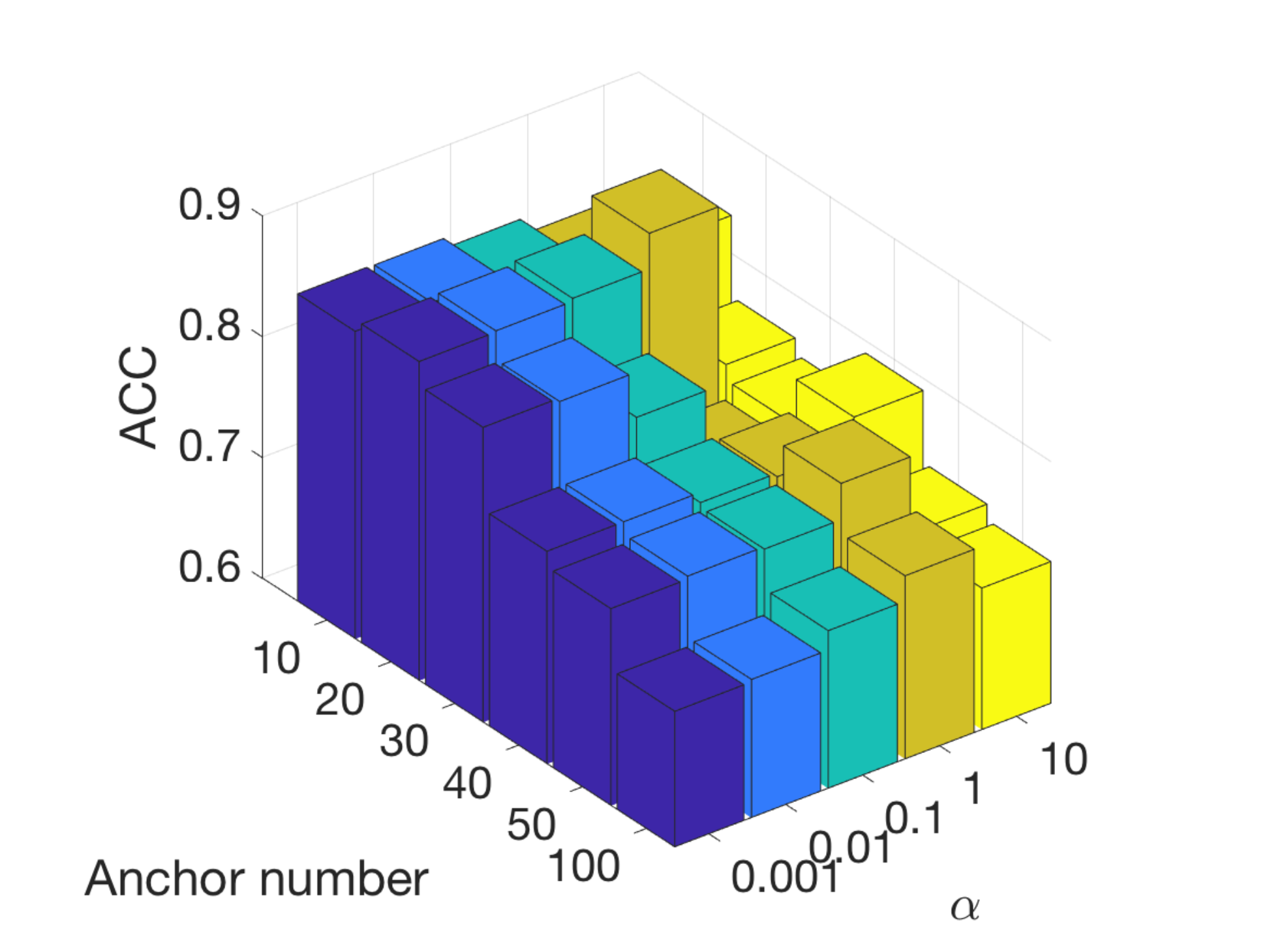}
\hspace{-1.cm}
\includegraphics[width=.33\textwidth]{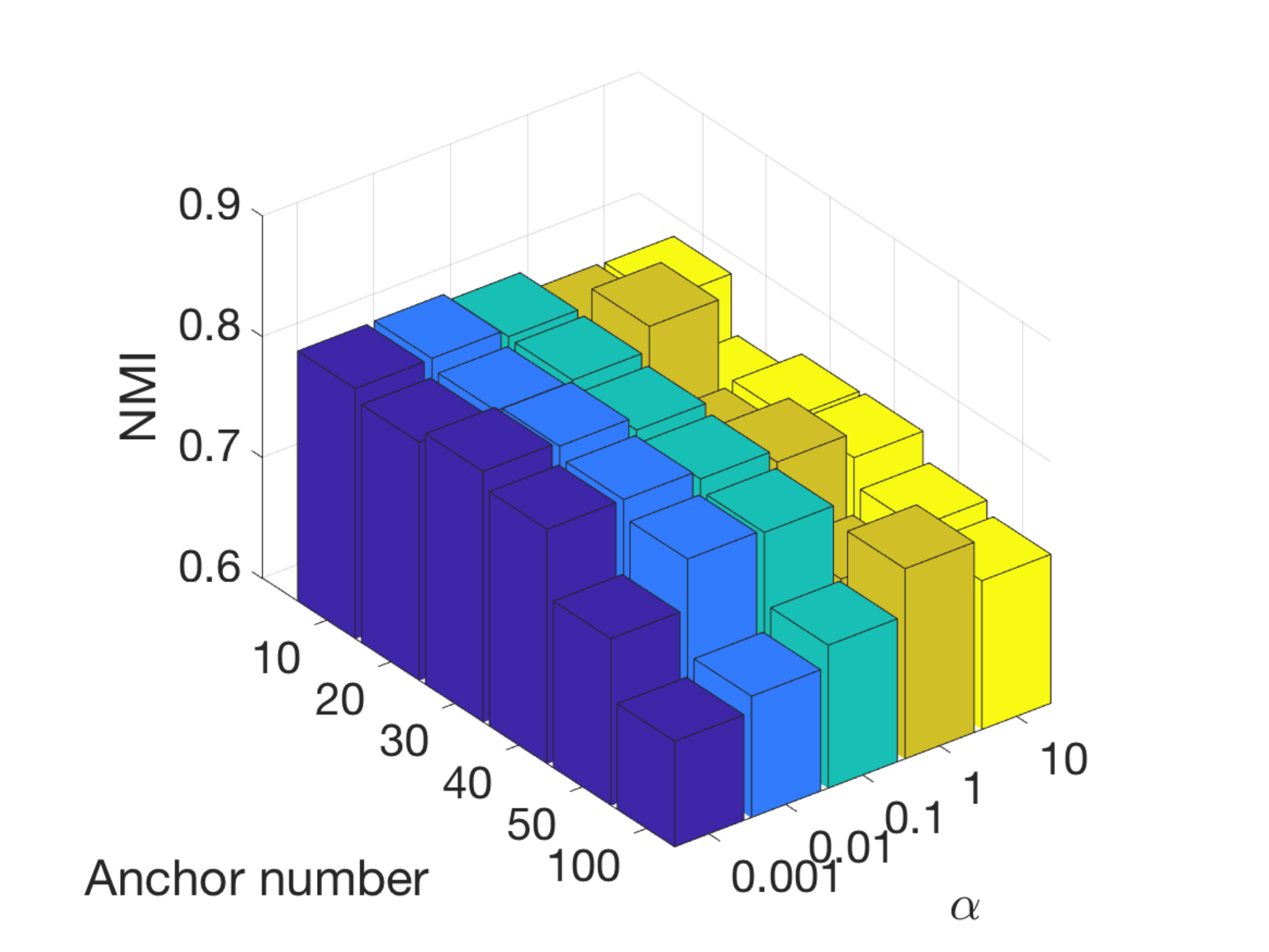}
\hspace{-1.cm}
\includegraphics[width=.33\textwidth]{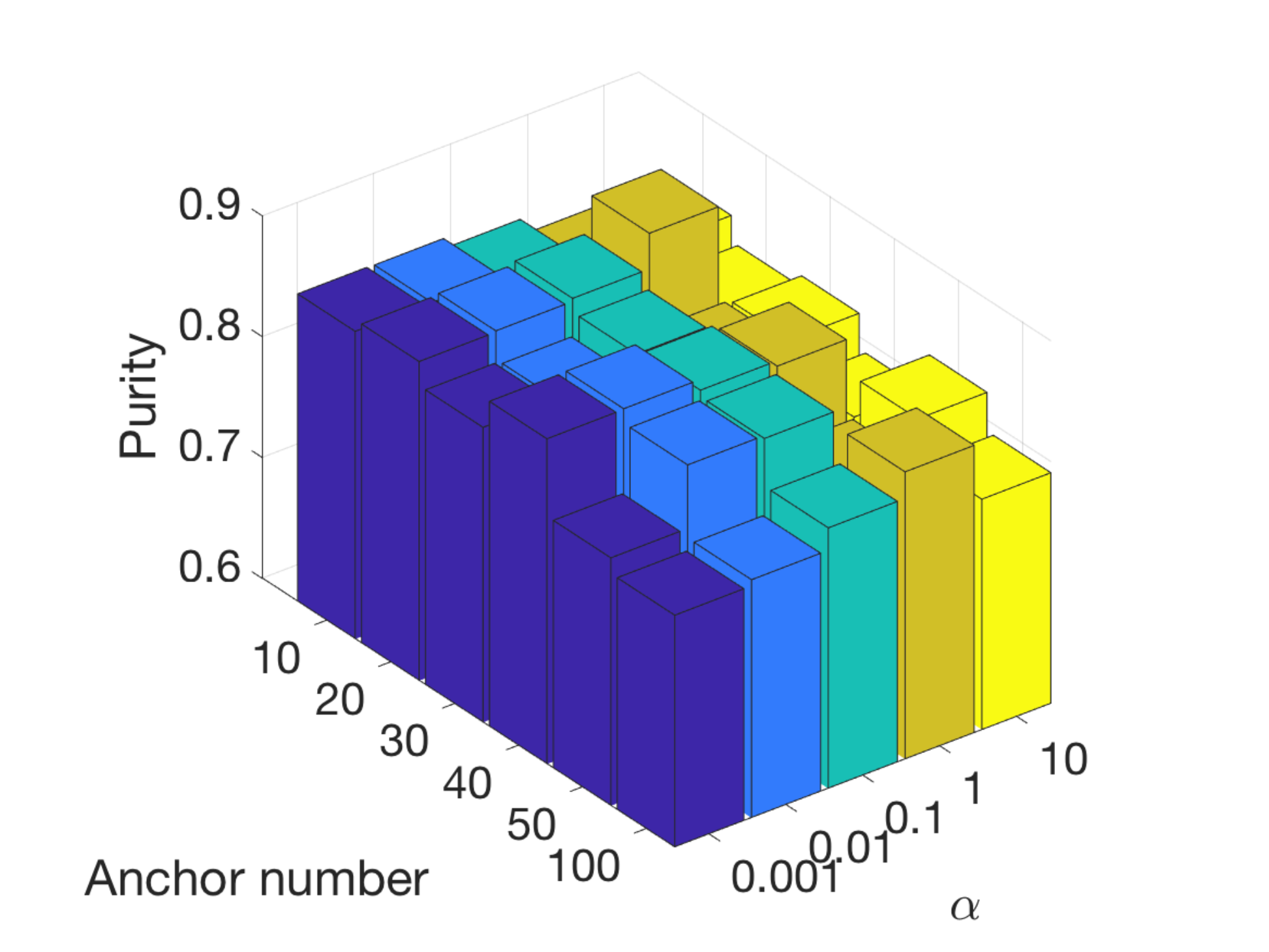}
\caption{Sensitivity analysis of parameters for our method over Handwritten data set.} \label{fig1}
\end{figure*}

 \begin{figure*}[!htbp]
\centering
\includegraphics[width=.33\textwidth]{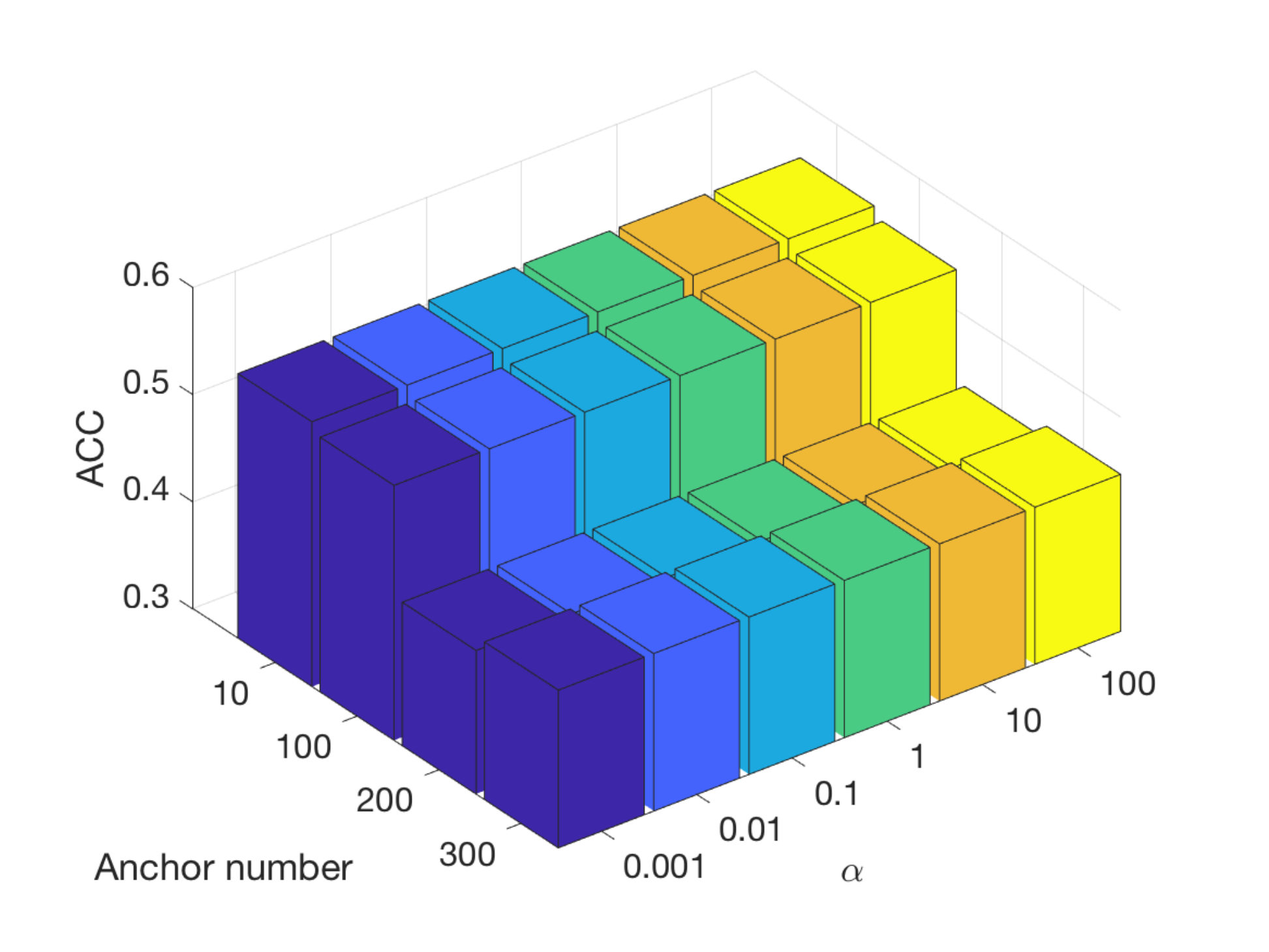}
\hspace{-.5cm}
\includegraphics[width=.33\textwidth]{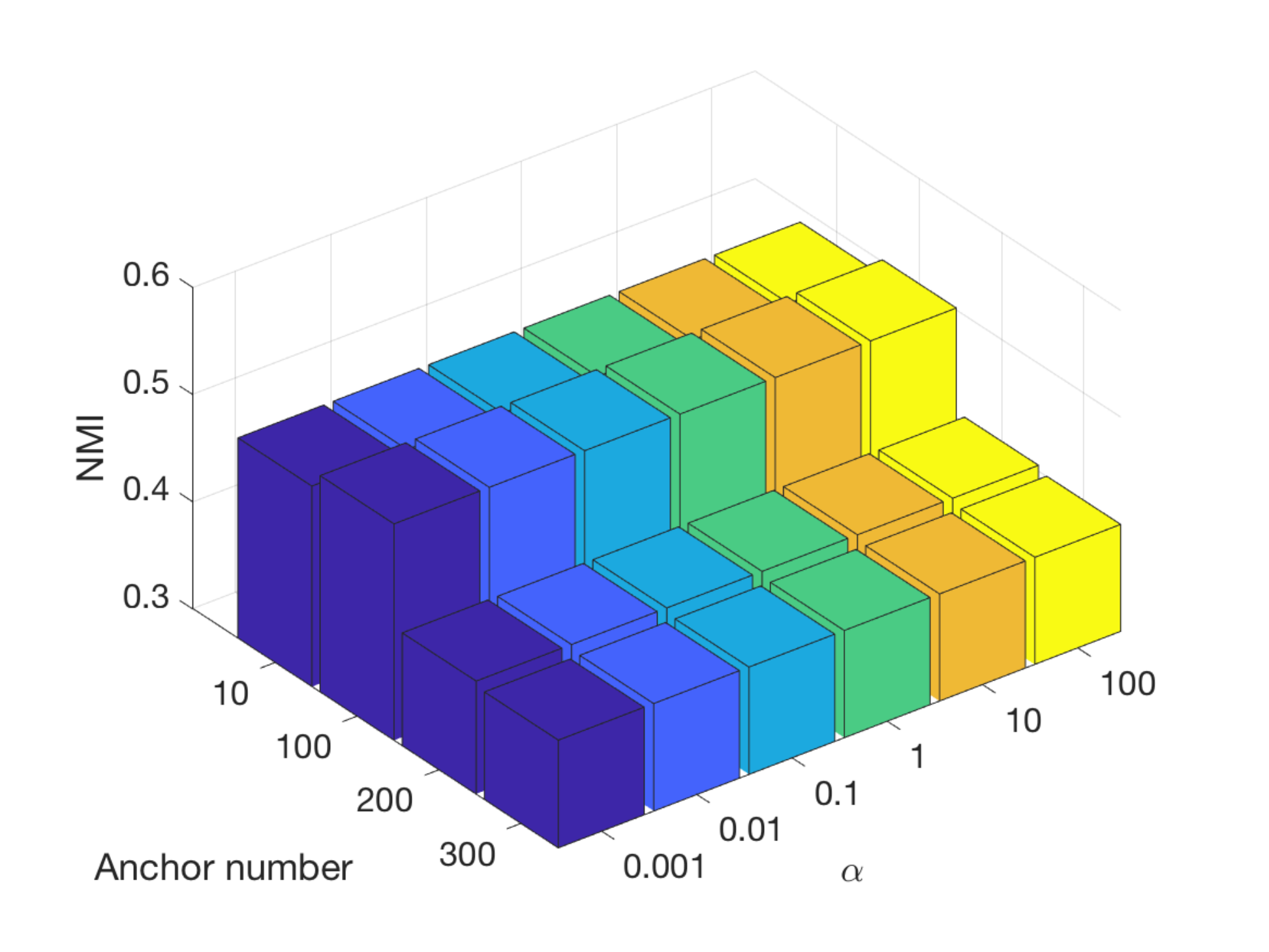}
\hspace{-.5cm}
\includegraphics[width=.33\textwidth]{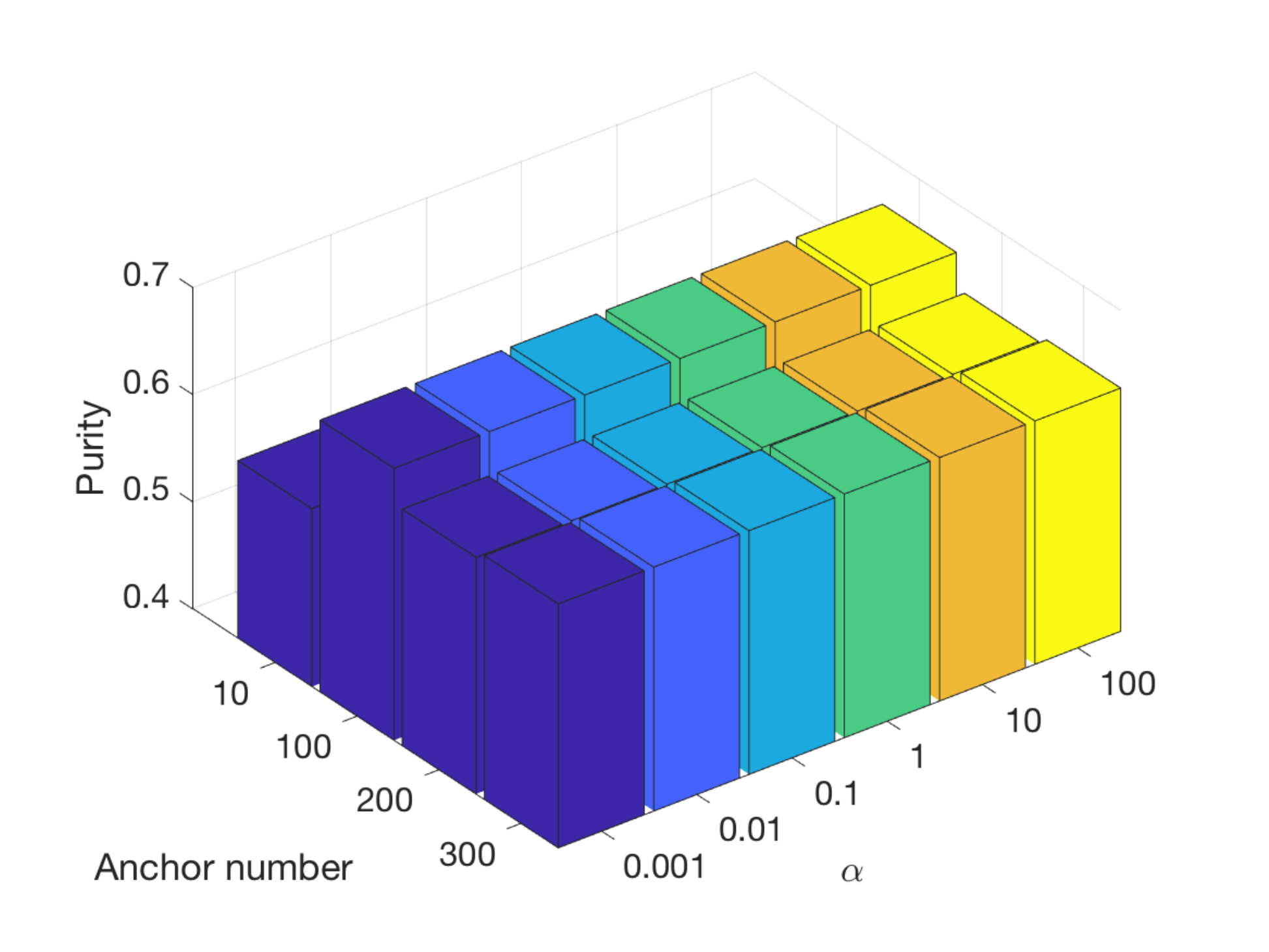}
\caption{Sensitivity analysis of parameters for our method over MNIST data set.} \label{fig2}
\end{figure*}
  \begin{table}[!hbtp]
\centering
\caption{Results of single-view data sets. \label{singleres}}

\resizebox{1\columnwidth}{!}{
\begin{tabular}{|c|l|c|l|c|l|c|l|c|l}
\hline
{Data Sets} &{Method} & {Acc} & {NMI} & {Purity}  & {Time (s)}   \\
\hline
\multirow{2}*{RCV1}&KM&0.1846&0.2447&0.2550&1569.07\\
\cline{2-6}
&LMVSC& 0.1929& 0.2600& 0.2985& 1035\\
\hline
\multirow{2}*{CoverType}&KM& 0.2505&0.0617&0.3039& 156.72 \\
\cline{2-6}
&LMVSC& 0.3862& 0.1273& 0.4889&2755.8\\
\hline

\end{tabular}}
\end{table}
\section{Experiment on Single-View Data}
We use two large-scale single-view data sets to demonstrate the performance of our algorithm on single-view data. Specifically, $\textbf{RCV1}$ consists of newswire stories from Reuters Ltd. $\textbf{CovType}$ contains instances for predicting forest cover type from cartographic variables. Their statistics are summarized in Table \ref{singledata}. We found that SSCOMP can not finish within 24 hours, while many other large-scale subspace clustering \cite{fan2018accelerated,xiao2015falrr} methods run out of memory. For comparison, we include $k$-means (KM) as baseline. From Table \ref{singleres}, we can see that our method improves the performance significantly. Considering the size of data, our computation time is still reasonable. For CoverType, KM runs fast since the cluster number is quite small.

\section{Conclusion}
In this paper, we propose a novel multi-view subspace clustering algorithm. To the best of our knowledge, LMVSC is the very first effort of addressing the large-scale problem of multi-view subspace clustering. Our proposed method has a linear computation complexity, while maintaining high level of clustering accuracy. Specifically, for each view, one smaller graph is built between the raw data points and the generated anchors. Then, a novel integration mechanism is designed to merge those graphs, so that the eigen decomposition can be accelerated significantly. Furthermore, our proposed method also applies to single-view data. Extensive experiments on real data sets verify the effectiveness, efficiency, and robustness of the proposed method against other state-of-the-art techniques.
\section{ACKNOWLEDGMENT}
This paper was in part supported by Grants from the Natural
Science Foundation of China (Nos. 61806045 and 61572111) and
Fundamental Research Fund for the Central Universities of
China (No. ZYGX2017KYQD177).
\bibliographystyle{aaai}
\bibliography{ref}
\end{document}